\documentclass{article}
\usepackage{microtype}
\usepackage{graphicx}
\usepackage{subfigure}
\usepackage{booktabs} %

\usepackage[hyperfootnotes=false]{hyperref}
\urlstyle{rm} 

\usepackage{arxiv}

\newcommand{\myseriesplotfigwidth}{4.0cm}

\usepackage[utf8]{inputenc}

\usepackage{tikz,ifthen,xstring,calc,pgfkeys,pgfopts}
\usepackage{tikz-uml}
\tikzumlset{fill class  = white} %

\usepackage{courier}

\usepackage{multirow}
\usepackage{array}
\usepackage{rotating}

\usepackage{listings}

\usepackage{mathtools}
\usepackage{amsmath,amssymb,amsthm}
\usepackage{thmtools}
\usepackage{xfrac}

\usepackage{graphicx}
\usepackage{adjustbox}

\usepackage{todonotes}

\usepackage[charter]{mathdesign}
\usepackage[mathcal]{eucal}
\usepackage{bbm}

\usepackage{color, colortbl}
\definecolor{Gray}{gray}{0.9}
\definecolor{LightCyan}{rgb}{0.88,1,1}

\usepackage[shortlabels]{enumitem}
\usepackage{scrextend}
\addtokomafont{labelinglabel}{\sffamily}

\usepackage{pdflscape}

\usepackage{makecell}

\usepackage{titlesec}
\titleformat{\section}
	{\normalfont\Large\bfseries\filcenter}{\thesection.}{1 ex}{}
\titleformat{\subsection}%
	{\normalfont\normalsize\bfseries}{\thesubsection.}{1 ex}{}
\titleformat{\subsubsection}%
	{\normalfont\normalsize\bfseries}{\thesubsubsection.}{1 ex}{}
\setcounter{tocdepth}{3}

\declaretheorem[within=section,name=Lemma]{Lem}
\declaretheorem[sibling=Lem,name=Definition]{Def}

\declaretheorem[sibling=Lem,name=Notation]{Not}
\declaretheorem[sibling=Lem,name=Proposition]{Prop}
\declaretheorem[sibling=Lem,name=Remark]{Rem}

\newcommand{\Var}{\operatorname{Var}}

\newcommand{\seq}{\operatorname{seq}}
\newcommand{\simplex}{\Delta}
\newcommand{\series}{\operatorname{series}}
\newcommand{\tpts}{\operatorname{time}}
\newcommand{\vals}{\operatorname{vals}}

\newcommand{\calH}{\mathcal{H}}

\newcommand{\calT}{\mathcal{T}}

\newcommand{\calV}{\mathcal{V}}
\newcommand{\calX}{\mathcal{X}}

\newcommand{\EE}{\ensuremath{\mathbb{E}}}

\newcommand{\NN}{\ensuremath{\mathbb{N}}}

\newcommand{\RR}{\ensuremath{\mathbb{R}}}

\newcommand{\bs}{\boldsymbol{s}}
\newcommand{\bt}{\boldsymbol{t}}

\newcommand{\bx}{\boldsymbol{x}}
\newcommand{\by}{\boldsymbol{y}}

\newcommand{\bSigma}{\boldsymbol{\Sigma}}

\newcommand{\bI}{\boldsymbol{I}}
\newcommand{\bK}{\boldsymbol{K}}

\newcommand{\bQ}{\boldsymbol{Q}}
\newcommand{\bT}{\boldsymbol{T}}
\newcommand{\bV}{\boldsymbol{V}}

\newcommand{\bX}{\boldsymbol{X}}

\makeatletter
\providecommand*{\diff}%
{\@ifnextchar^{\DIfF}{\DIfF^{}}}
\def\DIfF^#1{%
	\mathop{\mathrm{\mathstrut d}}%
	\nolimits^{#1}\gobblespace
}
\def\gobblespace{%
	\futurelet\diffarg\opspace}
\def\opspace{%
	\let\DiffSpace\!%
	\ifx\diffarg(%
	\let\DiffSpace\relax
	\else
	\ifx\diffarg\[%
	\let\DiffSpace\relax
	\else
	\ifx\diffarg\{%
	\let\DiffSpace\relax
	\fi\fi\fi\DiffSpace}
\makeatother

\def\approxd{\setbox0=\hbox{$\sim$}
	\setbox1=\hbox to \wd0{\hss.\hss}%
	\kern 4 pt\raise0.1ex\copy0\kern-\wd0\raise1.2ex%
	\copy1\kern-\wd0\copy1\kern 4 pt}

\def\dperp{ \mathrel{\kern0pt\vbox{\hbox to 0.8em{\hss	%
				\setbox0=\hbox{$\shortparallel$}\dp0=0pt\box0\hss}\hrule width 0.8em}}}

\usepackage{esint}
\makeatletter

\usepackage[utf8]{inputenc} %
\usepackage[T1]{fontenc}    %
\usepackage{url}            %
\usepackage{booktabs}       %
\usepackage{amsfonts}       %
\usepackage{nicefrac}       %
\usepackage{microtype}      %

\usepackage[backend=bibtex, natbib=true]{biblatex}
\addbibresource{references.bib}

\title{Kernels for time series with irregularly-spaced multivariate observations}

\author{%
	Ahmed Guecioueur \thanks{Corresponding author: \href{mailto:ahmed.guecioueur@insead.edu}{ahmed.guecioueur@insead.edu}}  \\
	INSEAD \\
	 \\
	\And
	Franz J. Király\\
	University College London \& \\
	The Alan Turing Institute \\
}

\begin{document}
		
\maketitle

\begin{abstract}
Time series are an interesting frontier for kernel-based methods, for the simple reason that there is no kernel designed to represent them and their unique characteristics in full generality.  Existing sequential kernels ignore the time indices, with many assuming that the series must be regularly-spaced; some such kernels are not even psd. In this manuscript, we show that a ``series kernel'' that is general enough to represent irregularly-spaced multivariate time series may be built out of well-known ``vector kernels''. We also show that all series kernels constructed using our methodology are psd, and are thus widely applicable. We demonstrate this point by formulating a Gaussian process-based strategy -- with our series kernel at its heart -- to make predictions about test series when given a training set. We validate the strategy experimentally by estimating its generalisation error on multiple datasets and  comparing it to relevant baselines. We also demonstrate that our series kernel may be used for the more traditional setting of time series classification, where its performance is broadly in line with alternative methods.
\end{abstract}

\keywords{Kernel methods; Time series; Gaussian processes; Forecasting; Classification}

\section{Introduction}

Kernel-based methods are widely applicable. This is partly because a variety of algorithms can be neatly kernelised, ranging from ridge regression \cite{saunders1998ridge} to PCA \cite{scholkopf1998nonlinear} and multi-task methods \cite{argyriou2007multi}. It is also partly thanks to the availability of kernels for diverse data types, from unstructured data such as reals and bags of words to structured trees, graphs and DNA sequences \cite{shawe2004kernel}. In this manuscript, we aim to extend the reach of kernels to time series.

We are not the first to develop a kernel capable of representing \textit{sequential} real-valued data \cite{cuturi2011fast, cuturi2007kernel, gartner2004kernels, haussler1999convolution, kiraly2019, lodhi2002text} but the series kernel that we propose is the first to represent \textit{time series} in all their generality. Our method for building series kernels represents multivariate real-valued observations, their sequence orderings, and the sequential indices (such as timestamps) that they correspond to, all while allowing for irregularly-spaced observations without any specific preprocessing. The method is conceptually simple, builds upon widely-used vector kernels, and results in psd (positive semi definite) kernels.

To draw a contrast with prior work, points on a time series can be loosely represented by any unstructured kernel, though this loses the information given by the sequential ordering. String kernels \cite{lodhi2002text} add some sequential structure by counting common subsequences but may ignore the full sequential structure; it is also unclear how best to apply these to real-valued sequences like time series, and so they are typically used for biological string sequences. Convolution kernels \cite{haussler1999convolution} represent general composite objects in terms of their sub-parts; they have been extended to represent typed logics \cite{gartner2004kernels} and have inspired a general method to build sequential kernels out of individual kernels \cite{kiraly2019}. One popular technique is the global alignment kernel \cite{cuturi2007kernel, cuturi2011fast}, which captures the similarities between sequences based on the operations needed to map one sequence to another.  An alternative approach is to construct a sequential kernel out of distance or similarity measures for sequences \cite{abanda2019review,pree2014general}; often, though, care is needed to ensure the resulting sequential kernels are psd, and this is an ongoing area of research.

Time series are sequential in nature and so, in principle, can be represented by any of the aforementioned \textit{sequential} kernels, such as global alignment kernels or distance-based sequential kernels. However, none of these sequential kernels exploits the information given by the time indices; they act only on the sequence of observations. This means that it is not immediately clear how to deal with irregularly-spaced time series, including situations where some observations are missing. We show how to build \textit{series} kernels specifically for time series. Our series kernels handle irregularly-spaced time series natively and are simple to implement, as they are based on well-known vector kernels. They can also represent a sequence without an index if some non-informative time index is supplied in place of the missing index.

In many domains, time series are ubiquitous; practically, they may be irregularly-spaced \& multivariate, so our series kernel methodology is widely applicable. As a result, we hope that it will spur the development of further kernel-based methods on time series inputs, including multi-task learning, unsupervised learning (dimensionality reduction \& clustering) and other techniques.

As a demonstration of our series kernel's applicability, we consider the traditional task of time series classification. We also consider the more interesting task of multi-series regression, by formulating a Gaussian process-based model to take advantage of training series when making forecasts for test series. We report the results of predictive experiments on a variety of datasets.

In Section \ref{section:series_kernel} we show how to construct series kernels for irregularly-spaced multivariate time series, and prove that all such kernels are psd. We then validate series kernels experimentally in Section \ref{section:experiments}. We discuss our results and the importance of our new approach in Section \ref{section:discussion}. Section \ref{section:conclusion} concludes.

\section{Series kernels}
\label{section:series_kernel}

We will now elaborate a general technique to build series kernels -- that is, kernels that can represent multivariate irregularly-spaced time series -- out of well-known vector kernels. We prove that all such series kernels are psd. Throughout this section, we assume familiarity with the theory of kernels at the level of a standard textbook treatment  \cite{scholkopf2001learning,shawe2004kernel,vapnik1998statistical}.

Two prior studies make closely related, though distinct, contributions. \citet{lu2008reproducing} formulated a pairwise distance measure for irregularly-spaced time series; their distance measure may be viewed as being constructed from a series kernel that is itself constructed from a single underlying vector kernel, though the authors themselves did not pursue that line of reasoning. \citet{joel} implemented a Gaussian/RBF kernel that applied kernel ridge regression to smooth its inputs.

\subsection{Notation for series and sequences}

\begin{Def}
	\label{def:series}
	We first introduce some auxiliary notation:
	\begin{itemize}
		\item[(i)] Let $\calX$ be any set. We denote by $\seq (\calX)$ the set of arbitrary tuples with entries in $\calX$, i.e, $\seq (\calX):= \{(x_1,...,x_m)\;;\;m \in \NN, x_i\in \calX\}$. We call elements of $\seq(\calX)$ sequences (with values) in $\calX$. For $\bx\in\seq(\calX)$, we denote by $\ell(x)$ the length of $\bx$, as a tuple.
		\item[(ii)] Let $\calT\subseteq\RR$. We denote by $\simplex (\calT)$ the ascending sequences in $\calT$, i.e., $\bt\in \simplex (\calT)$ iff $\bt\in\seq(\calT)$ and $\bt_i\le \bt_j$ for all $i\le j$.
		\item[(iii.a)] Let $\calX$ be any set, let $\calT\subseteq \RR$. We denote $\series(\calX,\calT):=\{(\bx,\bt)\in \seq(\calX)\times \simplex(\calT) \;;\; \ell(\bt) = \ell(\bx)\}$ and call elements of $\series (\calX,\calT)$ (discrete) time series in $\calX$.
		\item[(iii.b)] for $\bs = (\bx,\bt)\in \series (\calX,\calT)$, we write abbreviatingly $\vals(\bs) := \bx, \tpts(\bs) := \bt)$. By abuse of notation, we will also consider $\vals(\bs),\tpts(\bs)$ column vectors. %
	\end{itemize}
\end{Def}

\begin{Not}
	Let $k:\calX\times\calX\rightarrow \RR$ a kernel for an arbitrary set $\calX$, let $\bx,\by\in\seq(\calX)$, where $\ell(\bx) = m, \ \ell(\by) = n$. We will abbreviatingly denote by $k(\bx,\by)$ the $(m \times n)$ real matrix with entries $k(\bx,\by)_{ij} = k(\bx_i,\by_j)$.
\end{Not}

\subsection{Series kernels and their construction}
\label{section:construction}

We are now ready to define our object of interest: a kernel for potentially unequally-spaced time series. For exposition, we define it first for univariate time series.

\begin{Def}
	\label{def:series_kernel}
	Let $\calX,\calT \subseteq \RR$. Let $k', k'':\calT\times\calT\rightarrow \RR$ be psd kernels, assume $k''$ is in addition non-singular (= $k''$-matrices are always invertible). Define
	\begin{align*}
	k : \ & \series(\calX, \calT) \times \series(\calX, \calT) \to \RR  \label{eqn:series_kernel_types}\\
	& ((\bx,\bt),(\by,\bs)) \mapsto \bx^\top \cdot k'(\bt,\bt)^{-1}\cdot k''(\bt,\bs)  \cdot k'(\bs,\bs)^{-1} \cdot \by ,
	\end{align*}
	where we consider $k$ dependent on $k',k''$, albeit not explicit in notation.
\end{Def}

An intuitive interpretation of $k$, in terms of $k'$ and $k''$, is as follows: first, interpolate $(\bx,\bt)$ and $(\by,\bs)$ separately, by Gaussian process regression (or, equivalently, kernel ridge regression). Then compute some inner product (such as the L2 product) between the resultant functions. Finally, replace the appearing psd matrices with general kernel matrices to obtain the general expression. The following proposition and its proof highlight that all kernels $k$ may be obtained in such a way:

\vspace{0.2cm}

\begin{Prop}
\label{prop:psd}
Consider the kernel $k$, as defined in Definition~\ref{def:series_kernel}. Then the following hold:
	\begin{itemize}
	\item[(i)] $k$ is a psd kernel.
	\item[(ii)] $k(\bx,\by) = \left\langle f,g\right\rangle_\calH$ for some suitable Hilbert space $\calH$  of functions $\RR\rightarrow \RR$, and $f,g$ being the kernel ridge regression (= GP regression) interpolates of $\bx,\by,$ for a suitably chosen kernel.
	That is, there exist $\lambda\in \RR^+$ and a kernel $\tilde{k}$ such that $f(z) = \vals(\bx)^\top (\tilde{k}(\bt,\bt) + \lambda I)^{-1} \tilde{k}(\bt, z)$ and $g(z) = \vals(\by)^\top (\tilde{k}(\bs,\bs) + \lambda I)^{-1} \tilde{k}(\bs, z)$, where $\bt = \tpts(\bx)$ and $\bs = \tpts(\by)$.
\end{itemize}
\end{Prop}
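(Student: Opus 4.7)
The plan is to prove both parts simultaneously by exhibiting an explicit feature-map representation of $k$; part (i) then follows automatically, and part (ii) amounts to identifying this feature map with a kernel-ridge-regression interpolate.

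First, I would invoke the Moore--Aronszajn theorem on the psd kernel $k''$ to obtain an RKHS $\calH_{k''}$ together with its canonical feature map $u : \calT \to \calH_{k''}$, $u(t) = k''(\cdot,t)$, satisfying $\langle u(t), u(s) \rangle_{\calH_{k''}} = k''(t,s)$. I would then define a candidate feature map $\phi : \series(\calX,\calT) \to \calH_{k''}$ by
\[
\phi(\bx, \bt) := \sum_{i=1}^{\ell(\bx)} \alpha_i \, u(\bt_i), \qquad \alpha := k'(\bt,\bt)^{-1} \bx,
\]
which is well-defined because $k'$-matrices are invertible on finite index sets by assumption. Writing analogously $\beta := k'(\bs,\bs)^{-1}\by$, a direct bilinear computation yields
\[
\langle \phi(\bx,\bt), \phi(\by,\bs) \rangle_{\calH_{k''}} = \alpha^\top k''(\bt,\bs) \beta = \bx^\top k'(\bt,\bt)^{-1} k''(\bt,\bs) k'(\bs,\bs)^{-1} \by,
\]
which is exactly $k((\bx,\bt),(\by,\bs))$. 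Since $k$ is now realised as the Gram pullback of a genuine inner product, part (i) follows immediately.

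For part (ii), I would identify $\calH := \calH_{k''}$ with a space of functions $\RR \to \RR$ via the reproducing property, so that the feature vector $\phi(\bx,\bt)$ corresponds pointwise to the function
\[
f(z) = \bx^\top k'(\bt,\bt)^{-1} k''(\bt, z) = \sum_i \alpha_i \, k''(\bt_i, z).
\]
To match this against the advertised kernel-ridge-regression form $f(z) = \vals(\bx)^\top (\tilde k(\bt,\bt) + \lambda I)^{-1} \tilde k(\bt, z)$, I would take $\tilde k := k''$ off the diagonal and choose $\lambda \in \RR^+$ strictly smaller than the smallest eigenvalue of $k'(\bt,\bt)$ and of $k'(\bs,\bs)$, then modify the diagonal by $\tilde k(t,t) := k'(t,t) - \lambda$ so that $\tilde k(\bt,\bt) + \lambda I = k'(\bt,\bt)$ and likewise for $\bs$. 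A standard psd-completion then extends $\tilde k$ to a global psd kernel on $\calT$. In the natural symmetric case $k' = k''$, this is recovered trivially with $\lambda = 0$.

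The main obstacle I expect is the calibration step in part (ii): one must verify that the diagonal-subtracted matrix remains psd and admits a psd extension to a global kernel on $\calT$, and that the \emph{same} $\tilde k$ and $\lambda$ serve both $\bt$ and $\bs$ (which it does, since the same scalar $\lambda$ is subtracted from every diagonal entry). Part (i), by contrast, is just the one-line Gram-matrix identity displayed above; the only subtlety is checking that the assumed non-singularity of $k'$ is exactly what is required to make the coefficient vectors $\alpha,\beta$ well-defined.
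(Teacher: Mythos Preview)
Your argument for part (i) is essentially identical to the paper's: both invoke Moore--Aronszajn on $k''$, build the feature map $\phi(\bx,\bt) = \sum_i \alpha_i\, k''(\cdot,\bt_i)$ with $\alpha = k'(\bt,\bt)^{-1}\bx$, and read off the inner-product identity by bilinearity. The paper derives (i) from (ii) in exactly this spirit, noting that an inner-product representation makes every kernel matrix a Gram matrix.

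For part (ii) the paper is far less elaborate than you are. Having obtained $f = \vals(\bx)^\top k'(\bt,\bt)^{-1}\tilde{k}(\bt,\cdot)$ with $\tilde k$ the reproducing kernel of $\calH_{k''}$, it simply sets $\lambda = 0$ and declares ``the remaining symbols chosen in accordance with the statement''. Your attempt to make this identification precise via a diagonal modification, however, has a genuine gap: if $\tilde k := k''$ off the diagonal and $\tilde k(t,t) := k'(t,t) - \lambda$ on it, then the matrix $\tilde k(\bt,\bt) + \lambda I$ has diagonal entries $k'(\bt_i,\bt_i)$ but \emph{off-diagonal} entries $k''(\bt_i,\bt_j)$, which need not coincide with $k'(\bt_i,\bt_j)$. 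Hence $\tilde k(\bt,\bt) + \lambda I \neq k'(\bt,\bt)$ in general, and the KRR interpolate you write down does not reproduce the feature map from part (i). A diagonal-only adjustment simply cannot bridge two kernels that differ off the diagonal; the subsequent ``psd-completion'' step is therefore moot. Your instinct that the calibration between $k'$ and the pair $(\tilde k,\lambda)$ deserves care is correct---the paper treats statement (ii) more as an interpretive remark than a sharp claim, and its own proof does not actually resolve this point either.
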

\begin{proof}
	(i) is implied by (ii), since (ii) implies that any kernel matrix in $k$ is a Gram matrix, hence psd.\\
	(ii) Since $k''$ is psd, by the Moore-Aronszajn theorem there exists an RKHS $\calH$ and a reproducing kernel $\tilde{k}:\calT\times\calT\rightarrow \RR$ such that $k''(x,y) = \left\langle \tilde{k}(x,.),\tilde{k}(y,.)\right\rangle_\calH$ for all $x,y\in\calT$.
	Thus,
	{%
		\begin{align*}
			k(\bx,\by)  &= \vals(\bx)^\top \cdot k'(\bt,\bt)^{-1}\cdot \left\langle \tilde{k}(\bt,.), \tilde{k}(.,\bs)\right\rangle_\calH  \cdot k'(\bs,\bs)^{-1} \cdot \vals(\by) \\
			&=  \left\langle \vals(\bx)^\top \cdot k'(\bt,\bt)^{-1}\cdot \tilde{k}(x,.)\;,  \;\tilde{k}(.,\bs)  \cdot k'(\bs,\bs)^{-1} \cdot \vals(\by) \right\rangle_\calH 
		\end{align*}
	}%
	by linearity of the inner product. This is equal to $\left\langle f,g\right\rangle_\calH$ as in the statement (ii), with $\lambda = 0$, and the remaining symbols chosen in accordance with the statement.
\end{proof}

Conversely, any kernel constructed as in statement (ii) of Proposition~\ref{prop:psd} (with arbitrary $\lambda$) will satisfy the properties of Definition~\ref{def:series_kernel}. Thus, Definition~\ref{def:series_kernel} defines precisely those kernels on time series that can be constructed according to the intuition that we provided earlier, even though it doesn't define them explicitly through that particular intuitive algorithm. In fact, actually computing a time series kernel using that same intuition (first performing KRR, and then computing the inner product) would \emph{not} be naively possible, as it would require the handling of infinite objects in finite memory. On the other hand, the simple computation implied by Definition~\ref{def:series_kernel} is entirely practicable, as a product of finite size matrices.

Also, note that even though the value vectors appear only linearly, by Proposition~\ref{prop:psd}~(ii) this arises from interpolates subjected to arbitrary inner products, in arbitrary Hilbert spaces, by choosing a different $k''$. Therefore, linearity in the value vectors is not a substantial restriction in expressiveness, perhaps somewhat surprisingly.

One interesting ability of our series kernels is that they can be applied to \textit{any} pair of time series inputs, irrespective of what observations the time series have in common; indeed, the extreme case of a pair of time series that have no common observations at all, $\bt \cap \bs = \{\}$, is automatically handled. At the same time, no observations are ever ignored or dropped. Intuitively, this is thanks to the interpretation of KRR/GP regression as an interpolating or smoothing technique.

Multivariate generalizations, i.e. to the case $\calX\subseteq \RR^d$, may be obtained in complete analogy by considering Hilbert space products of functions $\calX\rightarrow \RR$ in Proposition~\ref{prop:psd}. A general form is as follows:

\vspace{0.2cm}

\begin{Def}
	\label{def:series_kernel_multivariate}
	Let $\calX\subseteq \RR^d$, let $\calT \subseteq \RR$. Let $k', k'':\calT\times\calT\rightarrow \RR$ be psd kernels, assume $k''$ is in addition non-singular (= $k''$-matrices are always invertible). Define
	\begin{align*}
	k : \  & \series(\calX, \calT) \times \series(\calX, \calT) \to \RR\\
	& ((\bx,\bt),(\by,\bs))  \mapsto  \sum_{i,j=1}^d  w_{ij}\cdot \left( \bx^{(i)} \right)^\top \cdot k'(\bt,\bt)^{-1}\cdot k''(\bt,\bs)  \cdot k'(\bs,\bs)^{-1} \cdot \by^{(j)},
	\end{align*}
where $\bx^{(i)},\by^{(i)}$ denotes the $i$-th resp.~$j$-th coordinate series of $\bx$ resp.~$\by$.
\end{Def}

\pagebreak

This could be further generalized to the case where the different coordinates are observed at different times. For simplicity of exposition, we will deal with the univariate case in subsequent discussions, unless stated otherwise.

\subsection{Computational considerations}
\label{section:computation}

In kernel learning, primary interest lies in the setting where we have multiple input time series, usually assumed to arise from an i.i.d.~process. Formally, we will assume an input of $N$ series $\bx_1,\dots,\bx_N$, jointly referred to as an $N$-tuple $\bX\in\series(\calX, \calT)^N$ (where $\bx_i=\bX_i$ are used interchangeably). Further data may be present as ``test set'' features, as an input of $M$ series $\bx_1^*,\dots,\bx_M^*$, jointly referred to as an $M$-tuple $\bX\in\series(\calX, \calT)^M$.

In various kernel and Gaussian process algorithms, one is usually interested in computing:
\begin{itemize}
	\item[(a)] kernel matrices $k(\bX,\bX)$,
	\item[(b)]  cross-kernel matrices $k(\bX,\bX^*)$, of which (a) is the special case where $\bX^*=\bX$, and
    \item[(c)] inverse kernel matrices $k(\bX,\bX)^{-1}$.
\end{itemize}

We will now make a number of general observations in aid of the above:

\begin{Rem}
Denote $\bt_i:=\tpts(\bx_i)$ and $\bt^*_i:=\tpts(\bx^*_i)$. Let $\alpha \in [2,3]$ be the scaling constant of a chosen matrix inversion algorithm, such that inverting an $(n\times n)$ matrix costs $O(n^\alpha)$. Known algorithms have $\alpha\ge 2.3$ \cite{coppersmith1990matrix}, while it is occasionally conjectured that $\alpha =2$ is achievable \cite{cohn2005group}. Let $D$ be the (asymptotically) typical length (using the appropriate average) of $\tpts(\bx_i)$ resp.~$\tpts(\bx^*_i)$. Further assume that computing $k'$ and $k''$ cost $O(1)$, e.g. for common kernels on the reals, or when they have already been computed. We also assume $M\le N$, otherwise we switch $\bX$ and $\bX^*$.
\begin{itemize}
	\item[(i)] Naive entry-by-entry computation of (b) $k(\bX,\bX^*)$ costs $O(MN (D^\alpha+MD(D+N)).$ Through pre-computing the inverses $k(\bt_i,\bt_i)^{-1}$ and $k(\bt_i^*,\bt_i^*)^{-1},$ this becomes $O((M+N)D^\alpha + M^2ND(D+N))$ which is strictly less in terms of $M,N,D$.
	\item[(ii)] If all $\tpts(\bx_i)$ are identical to some $\bt$, and all $\tpts(\bx_i^*)$ are identical to some $\bt^*$, an elementary computation shows that $k(\bX,\bX^*) = \bX^\top \cdot k'(\bt,\bt)^{-1}\cdot k''(\bt,\bt^*)  \cdot k'(\bt^*,\bt^*)^{-1} \cdot \bX^*$, where $\bX$ and $\bX^*$ are arranged in $(\ell(\bt)\times N)$ and $(\ell(\bt^*)\times M)$ matrices. Computation using this formula costs only $O(D^\alpha + MD(D+N))$, which is substantially less than the complexity in (i). Therefore, it may be beneficial to take unions or intersections over time points, if there is only small discrepancy between the $\bt_i$ or $\bt_i^*$ in (i). A more sophisticated approach is computing inverses of slightly discrepant $k(\bt_i,\bt_i)$ from each other, via the Sherman-Morrison-Woodbury formulae. Similarly, if $\bt_i$ differ only across blocks (of indices $i$), than block-inversion may be preferable.
\item[(iii)] A naive computation of inverse kernel matrices $k(\bX,\bX)^{-1}$, via computing the matrix and then its inverse from it, costs whatever complexity of (a) plus $O(N^\alpha)$. In the case of all $\tpts(\bx_i)$ identical, this is $O(D^\alpha + N^\alpha + MD(D+N))$ etc. Alternatively, an elementary computation shows that
    $k(\bX,\bX)^{-1} = \bX^{+\top} \cdot k'(\bt,\bt)\cdot k''(\bt,\bt)^{-1} \cdot k'(\bt,\bt) \cdot \left(\bX^*\right)^{+},$ where $.^+$ denotes the Moore-Penrose pseudo-inverse. Using this formula, the cost becomes $O(D^\alpha + MD(D+N))$, which is an improvement whenever $D\le N$, e.g. in the regime where $D$ does not grow as the number of data points $N$ grow.
\item[(iv)] The formulae can be further simplified when taking $k''=k'$, but empirically we have found this to be a bad choice. Instead, it seems more advisable to choose $k'(x,y) = \tilde{k}'(x,y) + \lambda \cdot \delta (x,y),$ where $\delta$ is the Kronecker delta function, and distinct real kernels $\tilde{k}', k''$, where $\lambda\in \RR^+$ is a regularization parameter, then tune $\tilde{k}', k'',\lambda$ considered as independent parameters. This is intuitively sensible via Proposition~\ref{prop:psd}, since $k''$ encodes similarity between the series, while $k'$ encodes temporal similarity between values within a series, and there is no a-priori reason why these (or the similarities they encode) should agree.
\item[(v)] Any given (symmetric positive definite) Gram matrix is amenable to spectral decompositions of the form $\bK = \bQ \bV \bQ^T$, where $\bV$ is a diagonal matrix of eigenvalues $v_1,\ldots,v_n$. This enables us to eliminate all re-inversions during $\lambda$'s hyperparameter tuning, while also avoiding the re-computation of $\bQ$ values associated with different implicit parameters of $k$ as those are being tuned: $ (\bK + \lambda \bI )^{-1} = \bQ (\bV + \lambda \bI)^{-1} \bQ^T = \bQ \big( \operatorname{diag}_{i=1,\ldots,n}  \{ v_i + \lambda \} \big)^{-1} \bQ^T = \bQ \ \operatorname*{diag}_{i=1,\ldots,n}  \{ ( v_i + \lambda )^{-1} \} \ \bQ^T $
\end{itemize}
\end{Rem}

\section{Experiments}
\label{section:experiments}

In this section, we demonstrate how our series kernel methodology may be applied to predictive tasks when one's dataset consists of multiple time series. Rather than engaging in a horse-race of techniques, our aim here is to confirm that our methodology is applicable to a variety of settings. Throughout, we use open-source software \cite{pysf,tslearn,sktime,scikit-learn} to conduct our experiments.

Much of the machine learning time series literature reports experimental results on the classification task, so we likewise begin with this. We then go on to use our series kernel methodology in a multi-series regression problem; that is, making forecasts for some test time series given a training set of other time series. We believe that this regression example is a natural one for time series data, and has not been previously considered by the sequential kernels literature. We would like to emphasise, though, that our series kernel methodology may be applied to machine learning problems beyond supervised learning.

\subsection{SVM-based classification}
\label{section:experiment_classification}

Time series classification has been extensively studied, and one generally-accepted conclusion in the literature is that a 1-NN classifier using the Dynamic Time Warping (DTW) distance is a suitable benchmark for this task because of its good performance \cite{bagnall16bakeoff}. Among sequential kernels, the global alignment kernel \cite{cuturi2011fast} is well-known, so we incorporate it into an SVM classifier and adopt that as our second benchmark.

We conduct this classification experiment on the \textit{Arrowhead}, \textit{Gunpoint} and \textit{Italy power demand} datasets of univariate time series, each available at the UCR/UEA repository \cite{bagnalluea}. We use the training/test splits provided to us by that repository, as is common in the literature. Table \ref{tbl:datasets_classification} summarises these three datasets.

\begin{table}[ht]
	\centering
	\small
	\bgroup
	\def\arraystretch{1.2} %
	\begin{tabular}{ | l | r r r r l |}
		\hline
		\textbf{Classification dataset} & \textbf{Training series} & \textbf{Test series} & \textbf{Class labels} & \textbf{Timestamps per series} & \textbf{Domain} \\
		\hline
		Arrowhead                       & 36                       & 175                  & 3                     & 176                                   & Image outline   \\
		Gunpoint                        & 50                       & 150                  & 2                     & 150                                   & Motion          \\
		Italy power demand              & 67                       & 1029                 & 2                     & 24                                    & Sensor reading \\
		\hline
	\end{tabular}
	\normalsize
	\vspace{0.25cm}
	\caption{Summary of the datasets used in the time series classification experiments.}
	\label{tbl:datasets_classification}
	\egroup
\end{table}

\begin{table}[ht]
	\centering
	\small
	\bgroup
	\def\arraystretch{1.2} %
	\begin{tabular}{| l | c c c |}
		\hline
		\textbf{Prediction strategy} &  \textbf{Arrowhead} &  \textbf{Gunpoint} & \textbf{Italy power demand} \\
		\hline
		Series kernel SVM &  77.71\% $\pm$ 3.15\% &  92.00\% $\pm$ 2.22\% &   95.34\% $\pm$ 0.66\% \\
		\hline
		DTW distance 1-NN          &  70.29\% $\pm$ 3.46\% &  90.67\% $\pm$ 2.38\% &   95.04\% $\pm$ 0.68\% \\
		Global alignment kernel SVM           &  82.29\% $\pm$ 2.89\% &  98.67\% $\pm$ 0.94\% &   96.11\% $\pm$ 0.60\% \\
		\hline
	\end{tabular}
	\normalsize
	\vspace{0.25cm}
	\caption{Time series classification experimental results. We report classification accuracy values $\pm$ 1 s.e. }
	\label{tbl:experiments_classification}
	\egroup
\end{table}

Within the training set, we conduct 5-fold cross validation to tune the hyperparameters of our strategies. The 1-NN DTW kernel has no such hyperparameters, but the other two kernel-SVM classifiers each have one hyperparameter associated with the SVM (a regularization parameter $C$, which we tune over parameters sampled from a log grid) and parameters that are specific to the kernels embedded within the SVMs. All these multiple hyperparameters are tuned jointly. The global alignment kernel has a kernel bandwidth parameter \cite{cuturi2011fast} that we tune over a grid of linearly-sampled values. 

We take a careful approach to tuning the series kernel, which we implement in the form given by Proposition~\ref{prop:psd}~(ii). It thus requires a second regularization parameter $\lambda$, which we tune over parameters sampled from a log grid. We also define additional hyperparameters that specify how the constituent vector kernels ($k', k''$) are constructed: one parameter is for the type of the between-series vector kernel (possible types: linear, RBF or Laplacian) and a similar parameter is for the type of the within-series vector kernel (same possible types). There are also there are two additional $\gamma$ ($=\sfrac{1}{2\sigma^2}$) parameters that we tune over, one for each of ($k', k''$) when they are RBF or Laplacian vector kernels; the linear vector kernel has no such parameter. In this way, our approach is to treat the form of the constituent vector kernels ($k', k''$) as parameters to tune over. An alternative approach would have been to initialise a variety of series kernels based on combinations of constituent vector kernels (eg. linear-Laplacian, RBF-RBF, RBF-linear, and so on) according to our methodology, and treat those as separate predictors in individual experiments; we believe our approach is neater and allows us to evaluate our series kernel construction methodology as an overall prediction strategy.

Table \ref{tbl:experiments_classification} presents the out-of-sample classification accuracy scores for this experiment. The first row shows results for an SVM classifier that uses a series kernel to calculate similarities between entire time series, rather than just individual points. The third row shows results for a similar SVM-based strategy, but where the kernel used is a global alignment kernel instead. Each kernel is embedded in an SVM classifier. The tuning is as described above. Standard errors are calculated using the jackknife. The scores indicate that the series kernel-based SVM strategy always outperforms the 1-NN DTW baseline, though it never ranks as the best classifier. We make no claims about statistical significance with this rather limited set of estimates. Nevertheless, these limited results appear to suggest that series kernels constructed according to our methodology perform broadly in line with related approaches. A more rigorous benchmarking exercise along the lines of \citet{bagnall16bakeoff} is needed to draw stronger conclusions.

A final point of interest is that we have deliberately restricted ourselves to univariate datasets with regularly-spaced samples in this setting, for two reasons: (i) to use datasets that have been well-studied in the time series classification literature, and (ii) to avoid excluding interesting techniques that are designed specifically for sequences rather than general time series. As more techniques are developed to deal with general time series, rather than sequences, it would be interesting to revisit these classification experiments in the future. The regression experiment we now present in Section \ref{section:experiment_regression} takes a step in this direction, and away from the standard setting.

\subsection{Gaussian process-based regression}
\label{section:experiment_regression}

Since our series kernel is psd, it lends itself to being interpreted as a covariance matrix between time series, and we can thus adopt a Gaussian process model of the data in the style of \citet{RasmussenWilliams} pp. 15-17. More precisely, under the assumptions of a zero-mean joint Gaussian prior on training series observations and noisy test series observations, whose joint covariance matrix can be partitioned into Gram matrices of series kernel evaluations, the mean of the conditional posterior of the joint distribution can be given in closed form. It is this posterior mean that shall be our predicted value for the test time series. It is worth emphasising that in our model each data point is, in fact, an entire time series segment.

We will conduct our Gaussian Process regression (GPR) benchmarking on four datasets consisting of multiple time series. Fig. \ref{fig:ramsay_growth_data_plots} plots the \textit{Berkeley growth} dataset of \citet{tuddenham1954physical}. This consists of height readings of 93 children taken at irregular intervals (quarterly then semi-annual) over 18 years. The 93 series are each univariate. Fig. \ref{fig:ramsay_canadian_weather_data_plots} plots the \textit{Canadian weather} dataset of \citet{FDA}. This consists of average daily temperature \& precipitation readings from 35 weather stations. The 35 series are each bivariate. Both datasets are available online\footnote{See James Ramsey’s website at \url{http://www.psych.mcgill.ca/misc/fda/downloads/FDAfuns/}}. Importantly, these two datasets possess some characteristics that are relevant to practical applications -- the former is irregularly-spaced and the latter is multivariate -- and we will argue shortly that it is important to consider such datasets when designing methods for time series.

For the sake of comparison, in this experiment we will also make use of two additional datasets of univariate series, each available at the UCR/UEA repository \cite{bagnalluea}: \textit{ECG200} (200 cardiac cycles) and \textit{Starlight} (periodic photometric readings, of which we sample 100 series). Table \ref{tbl:datasets_regression} summarises all four datasets.

\begin{figure}[ht]
	\centering
	\vspace*{-0.25cm}
	\includegraphics[width=\myseriesplotfigwidth, angle=0]{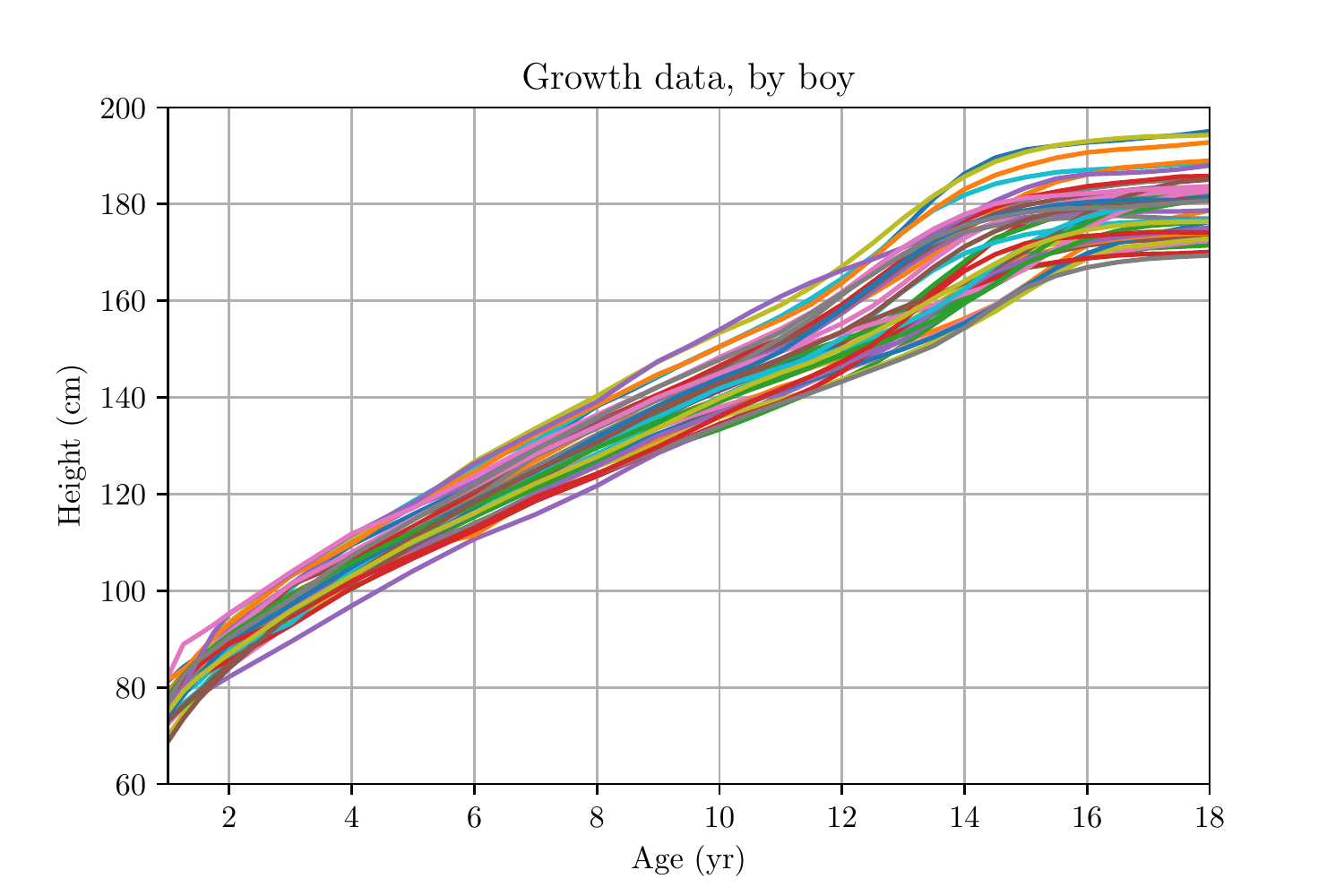}
	\includegraphics[width=\myseriesplotfigwidth, angle=0]{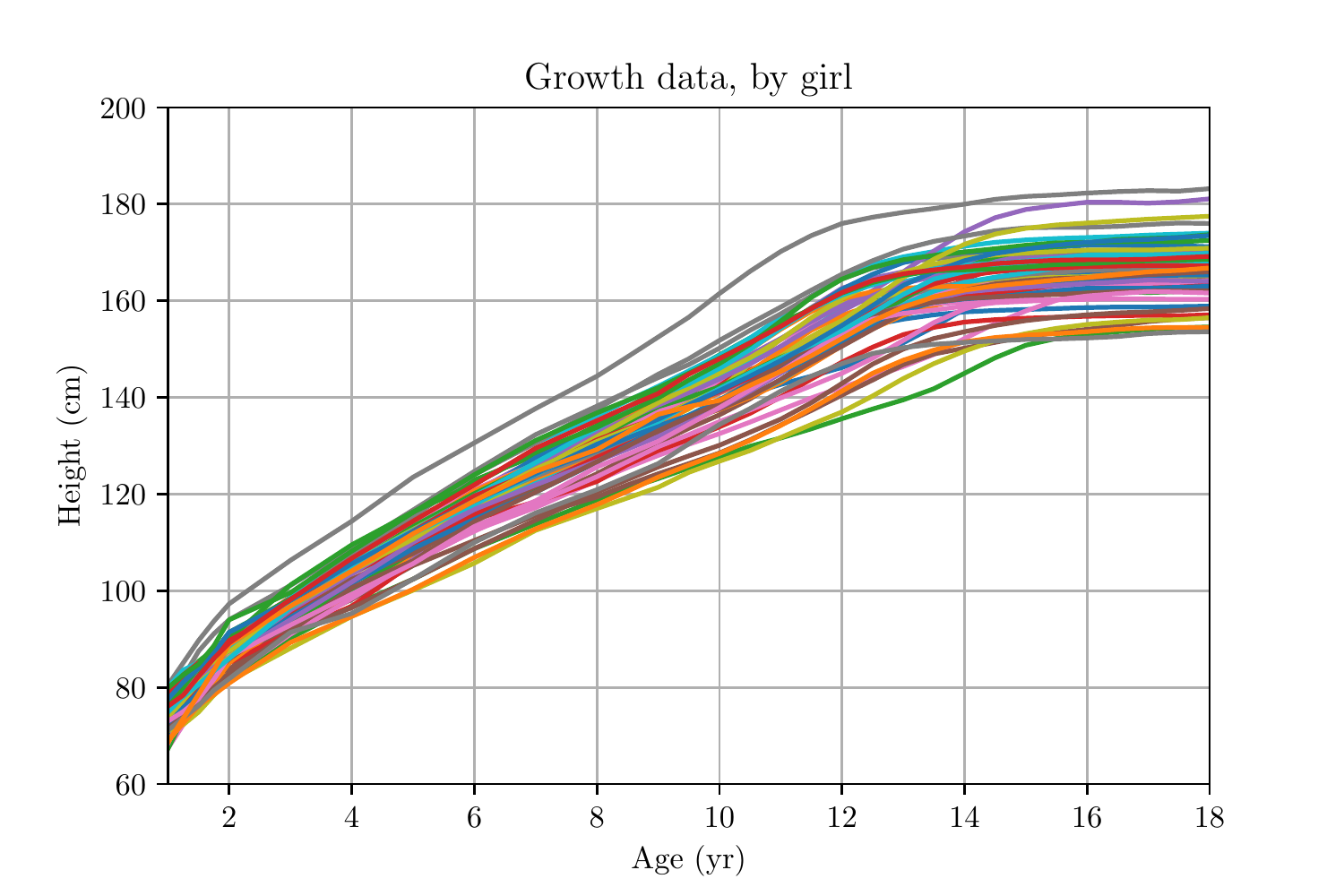}
	\vspace*{-0.25cm}
	\caption{Series plots of the Berkeley growth data. Each series is labelled as a boy or girl, so the data has been divided into 2 corresponding plots for legibility. Height is the only time-indexed variable.}
	\label{fig:ramsay_growth_data_plots}
\end{figure}

\begin{figure}[ht]
	\centering
	\vspace*{-0.5cm}
	\includegraphics[width=\myseriesplotfigwidth, angle=0]{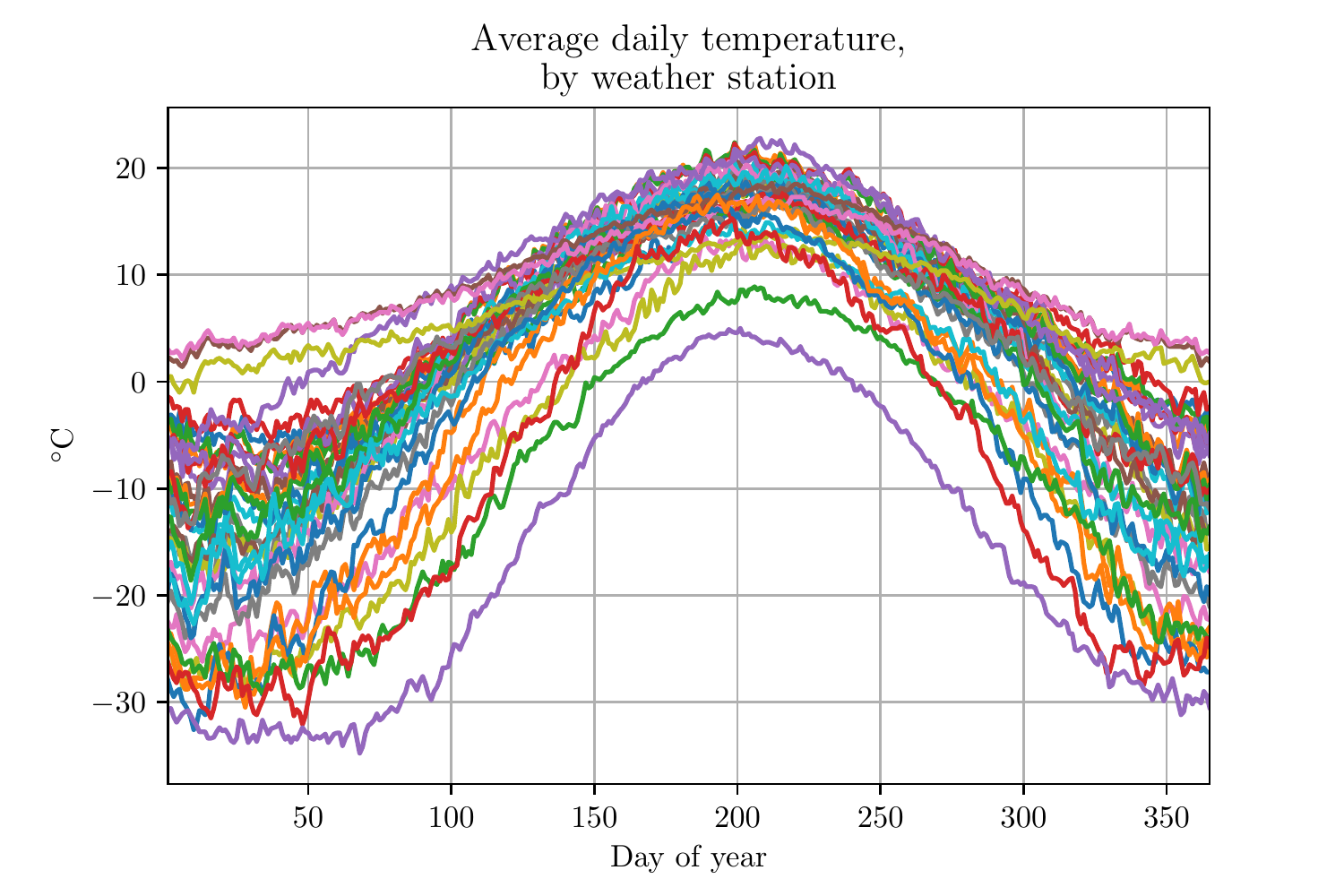}
	\includegraphics[width=\myseriesplotfigwidth, angle=0]{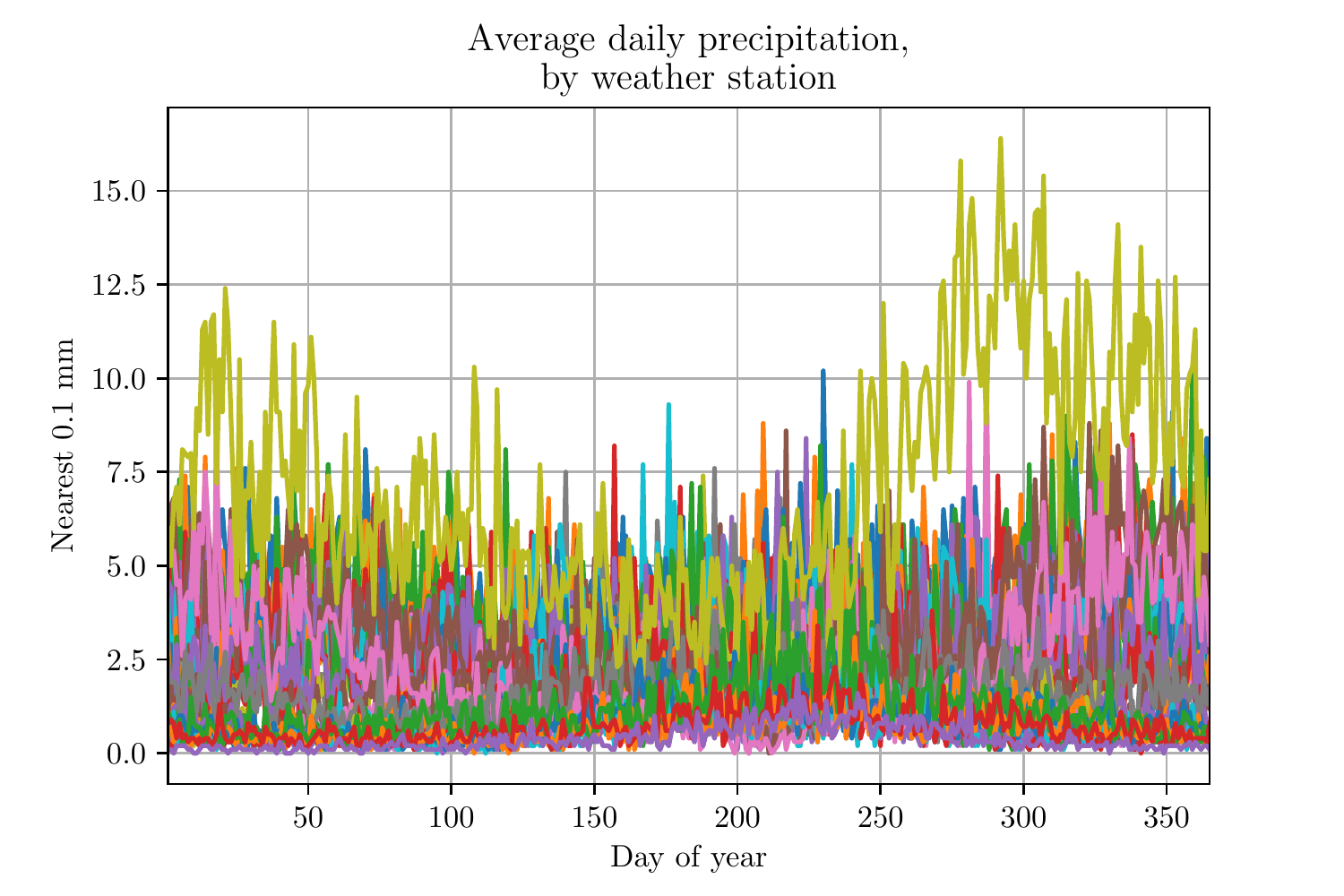}
	\vspace*{-0.25cm}
	\caption{Series plots of the Canadian weather data, with each of the 2 variables (average temperature \& average precipitation) of the bivariate series plotted separately.}
	\label{fig:ramsay_canadian_weather_data_plots}
\end{figure}

\begin{table*}[ht]
	\centering
	\small
	\bgroup
	\def\arraystretch{1.2} %
	\begin{tabular}{ | l | r r r l l | }
		\hline
		\textbf{Regression dataset} & \textbf{Series} & \textbf{Training timestamps} & \textbf{Test timestamps} & \textbf{Irregularly-spaced}   & \textbf{Multivariate} \\
		\hline
		Berkeley growth              & 93              & First 22                      & Last 9                   & Yes: quarterly \& semi-annual & No                    \\
		Canadian weather             & 35              & First 300                     & Last 65                  & No                            & Yes: 2 variables      \\
		ECG200                       & 200             & First 48                      & Last 48                  & No                            & No                    \\
		Starlight (sampled)                    & 100   & First 500                      & Last 500                  & No                            & No                   \\
		\hline
	\end{tabular}
	\normalsize
	\caption{Summary of the datasets used in the time series regression experiments.}
	\label{tbl:datasets_regression}
	\egroup
\end{table*}

\begin{table*}[ht]
	\centering
	\small
	\bgroup
	\def\arraystretch{1.2} %
	\begin{tabular}{|l|rrrrr|}
		\hline
		& & \multicolumn{2}{c}{\textbf{Canadian weather}}  & &  \\
		\cline{3-4}
		\textbf{\thead{Prediction strategy}} & \textbf{\thead{Berkeley \\ growth}} & \textbf{\thead{precipitation \\ variable}} & \textbf{\thead{temperature \\ variable}} & \textbf{\thead{ECG200}} & \textbf{\thead{Starlight}} \\
		\hline
		Series kernel GPR & 5.44 $\pm$ 0.68 *** & 1.09 $\pm$ 0.27  & 1.91 $\pm$ 0.36 *** & 0.29 $\pm$ 0.03 *** & 0.26 $\pm$ 0.06 *** \\
		\hline
		Global alignment kernel GPR & 4.16 $\pm$ 0.68 *** & 1.75 $\pm$ 0.68 & 3.82 $\pm$ 1.49 ** & 0.25 $\pm$ 0.03 *** & ---  \\ 
		ARIMA & 13.77 $\pm$ 1.87 & 1.28 $\pm$ 0.36 & 6.05 $\pm$ 1.88 & 0.95 $\pm$ 0.07 & 1.09 $\pm$ 0.10 \\
		ElasticNet & 10.29 $\pm$ 0.82 & 1.38 $\pm$ 0.41 & 12.07 $\pm$ 1.16 & 0.94 $\pm$ 0.05 & 1.09 $\pm$ 0.10 \\
		Linear & 14.88 $\pm$ 1.04 & 1.85 $\pm$ 0.41 & 30.38 $\pm$ 2.56 & 2.56 $\pm$ 0.23 & 2.59 $\pm$ 0.25 \\
		\hline
		Baseline timestamp means & 9.59 $\pm$ 1.38 & 2.28 $\pm$ 0.56 & 8.61 $\pm$ 1.34 & 0.43 $\pm$ 0.03 & 0.59 $\pm$ 0.06 \\
		Baseline zero values & 169.78 $\pm$ 1.58 & 3.15 $\pm$ 0.96 & 10.72 $\pm$ 1.91 & 0.59 $\pm$ 0.03 & 0.78 $\pm$ 0.04 \\
		Baseline linear interpolator & 8.07 $\pm$ 0.74 & 1.35 $\pm$ 0.34 & 9.97 $\pm$ 1.04 & 0.87 $\pm$ 0.08 & 1.14 $\pm$ 0.07 \\
		Baseline series means & 46.45 $\pm$ 0.83 & 1.38 $\pm$ 0.41 & 12.07 $\pm$ 1.16 & 0.94 $\pm$ 0.05 & 1.09 $\pm$ 0.10 \\
		\hline
	\end{tabular}
	\normalsize
	\caption{Time series regression experimental results. We report estimated generalisation errors (RMSE) $\pm$ 1 s.e. Stars indicate that our t-test's null hypothesis (of the given strategy's performance being indistinguishable from or inferior to that of the best baseline's) was rejected at a given level of confidence: ***=99\%, **=95\%, *=90\%.}
	\label{tbl:experiments_combined_mean_plus_se}
	\egroup
\end{table*}

For simplicity, we fix the test timestamps to be the terminal subset of observations: the last 9, 65, 48 and 500 observations of the Berkeley, Canadian, ECG200 and Starlight datasets, respectively. In this sense, our regression task is actually a forecasting task, which opens the door to using some methods that are specifically designed for forecasting: we thus include an ARIMA model among our prediction strategies.

Once again, we discuss our tuning procedures in some detail. We set up our series kernel and global alignment kernel to be tuned in exactly the same way as we described in Section \ref{section:experiment_classification}. The main difference is that we now have a GPR-based prediction strategy rather than an SVM-based strategy, so we now have a regularization parameter $\lambda$ to tune over instead of $C$, as we had in our classification setup. All other aspects of kernel construction and hyperparameter tuning are identical. The two remaining predictors that require tuning are ARIMA (which has that statistical model's standard $p,d,q$ integer hyperparameters) and ElasticNet (which has an $\alpha$ penalisation hyperparameter that we sample from a log grid, and a second hyperparameter that controls the fraction of mixing between L1 and L2 penalties, which we sample linearly).

We conduct nested 5-fold cross-validation so as to avoid any potential bias in our estimates \cite{varma,cawley}: the inner loop is for the hyperparameter tuning discussed above, and the outer loop is to provide the resampling that is necessary to enable the estimation of the generalization error. The cross-validation splits are along series. Fig. \ref{fig:stylised_multi_series_index_sets} is a stylized illustration of our goal for one example fold (with an equal split for legibility): learning from some training set of series $X_{\calT}$ to predict $X_{\calV} (\bT_{\calV})$; i.e. the values of some set of test series $X_{\calV}$ at prediction times $\bT_{\calV}$.

\begin{figure}[ht]
	\centering
	\vspace*{-0.25cm}
	\includegraphics[width=8.0cm]{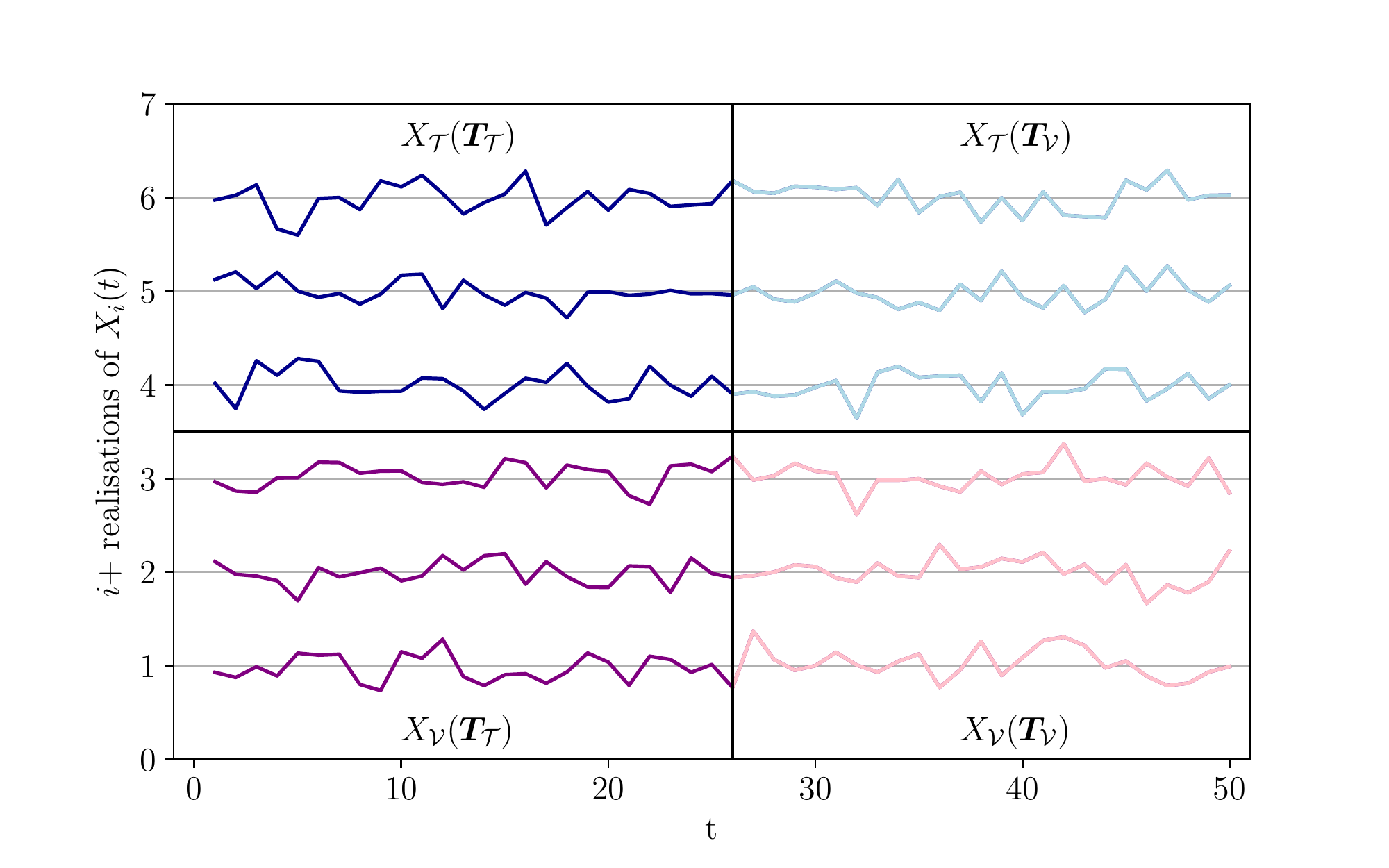}
	\vspace*{-0.25cm}
	\caption{Illustration of splitting multiple series over multiple timestamps into training and test sets.}
	\label{fig:stylised_multi_series_index_sets}
\end{figure}

Given a loss function, $L: \calX \times \calX \to \mathbb{R}$, a series $X_i$'s ground truth values at indices $t$, $X_i(t)$, and the corresponding predictions made by our strategy, $\widehat{X}_i(t)$, we aggregate the strategy's prediction errors over $N$ series and $K$ timestamps into a single estimate $\widehat{\varepsilon} = \frac{1}{NK} \sum_{i=1}^N \sum_{j=1}^K L \big( \widehat{X}_i(t_j), X_i(t_j) \big)$. This estimate can be shown to be an unbiased consistent estimator for the expected prediction error $\varepsilon  = \frac{1}{K} \sum_{j=1}^K \EE \big[ L \big( \widehat{X}_*(t_j), X_*(t_j) \big)  \big]$. Using the $K \times K$ unbiased sample covariance matrix of loss function evaluations, $\bSigma$, we also estimate the variance of $\widehat{\varepsilon} $ by $\widehat{v} = \frac{1}{N K^2} \  \sum_{i=1}^{K} \sum_{j=1}^{K} \bSigma_{ij}$. This can be shown to be an unbiased consistent estimator of $v = \Var[\widehat{\varepsilon}]$. In the outer cross-validation loop, we take the means of the single-fold estimates ($\widehat{\varepsilon}, \widehat{v}$)  across all 5 folds to produce the cross-validated estimates ($\widehat{\varepsilon}_{CV}, \widehat{v}_{CV}$). It can be argued that the cross-validated variance estimator $\widehat{v}_{CV}$ is conservative (i.e. overestimates the true variance) by assuming identical \& exchangeable single-fold errors $\widehat{\varepsilon}$ and applying \citet{bengio2004} Lemma 1.
 
Table \ref{tbl:experiments_combined_mean_plus_se} presents the empirical estimates of the generalisation error for both our series kernel GP-based prediction strategy and for some comparable strategies that are tuned \& estimated similarly. We report RMSE (Root Mean Square Errors) for our generalisation error estimates, having used a squared error function for the loss $L$ and applied the delta method to calculate the standard errors. Four simple baselines are also presented. We have conducted one-tailed two-sample t-tests to compare each prediction strategy's generalisation error to that of the best baseline for the given variable: $t = \frac{ \widehat{\varepsilon}_{CV}^{\text{ predictor}} - \widehat{\varepsilon}_{CV}^{\text{ best baseline}} }{ \sqrt{ \widehat{v}_{CV}^{\text{ predictor}} + \widehat{v}_{CV}^{\text{ best baseline}} } }$, and the degrees of freedom are $\sfrac{1}{5}$ the number of series per dataset.

It is worth noting that we were unable to evaluate the global alignment kernel on series from the Starlight dataset, hence the missing value in Table \ref{tbl:experiments_combined_mean_plus_se}. To perform these evaluations, we used the implementation of \citet{tslearn}, who based their code on algorithms by \citet{cuturi2011fast}. The time series in the Starlight dataset are the longest that we consider in this manuscript (1000 timestamps each), and evaluating the global alignment kernel on those series did not return valid results. In this manuscript, we do not investigate how to implement the global alignment kernel in a way that avoids such practical issues, since the algorithms involved are rather complicated. In contrast, and thanks to their simplicity, none of the series kernels we constructed encountered such difficulties in any of our experiments.

Returning to our main kernels of interest, we conclude that there is statistical evidence that our series kernel GPR-based prediction strategy outperforms the best baseline for all but one of the experiments. There is no such evidence for the majority of the simpler prediction strategies, including one designed specifically for forecasting (ARIMA). The notable exception to this is the global alignment kernel GPR-based prediction strategy, which also outperforms the best baseline on most datasets. Overall, we believe these results validate the use of our series kernel as part of a GPR-based prediction strategy.

\section{Discussion}
\label{section:discussion}

The main argument of our manuscript has been that it is useful to construct kernels that are designed to handle \textit{time series} data, rather than the more specific case of \textit{sequential} data, where the literature has until now focussed. Information is conveyed by the time indices assigned to observations in a time series, and this \textit{cardinal} information is richer than the \textit{ordinal} information conveyed by a simple sequential ordering. Other than our method of constructing series kernels, we are not aware of other kernel-based approaches that focus on time series, rather than simply sequences.

From a practical point of view, time series may be irregularly-spaced or multivariate, and we believe that it is important that kernels for time series be able to handle such cases. Although it is always possible to preprocess data, impute missing values, or ignore extraneous information that cannot be handled by a particular kernel, we argue that incorporating all available information from a time series can aid in making better predictions for real-life datasets.

One observation we would like to make is that there seems to be a lack of standardised irregularly-spaced datasets in the machine learning literature. In this manuscript, we made use of the irregularly-spaced Berkeley growth dataset collected by \citet{tuddenham1954physical} to validate our series kernel, and we hope that future large-scale benchmarking investigations can likewise incorporate similar datasets. It should be noted that the irregular spacing in the Berkeley growth dataset is rather mild and predictable, and it would be interesting to evaluate our series kernel on datasets where more values are missing, or where the irregular spacing is more severe. For example, in medical applications it is possible that sensor readings for different subjects can be taken at completely distinct timestamps, so that there are few or no common timestamps between time series at all; as we mentioned in Section~\ref{section:construction}, series kernels constructed according to our methodology easily handle such situations -- and should thus be considered for such applications. 

One difficulty of using highly irregularly-spaced data is that it would no longer be clear what benchmarks or alternative approaches one should be comparing to, as one would then diverge from the most common experimental settings in the literature. For this reason, most of the datasets we used in this manuscript have been regularly-spaced, and our aim has been to validate the usefulness of our series kernels in the most common setting.

The flexibility of our series kernels is thanks to the intuition of smoothing/interpolation that lies behind their construction. That is quite different to the intuition behind the global alignment kernel, which is based on scoring all possible alignments between sequences \cite{cuturi2007kernel,cuturi2011fast}. A second difference is that our methodology results in kernels designed for time series, while the global alignment kernel is designed specifically for sequences, which means the latter would ignore any information conveyed by time indices. A third difference is that our series kernels are relatively simple to implement, while the algorithms underpinning the evaluation of the global alignment kernel are rather complicated, and we encountered some numerical difficulties with the implementation we used in one experiment. On a related note, we have detailed a number of relatively simple ways to optimize evaluations of our series kernels in Section \ref{section:computation}. 

Besides alignment kernels, sequential kernels that are constructed from distances \cite{abanda2019review,pree2014general} are also able to represent sequential data. However, we are not aware of any such kernels that act on time series in their full generality, rather than on sequential data specifically. Many distances can handle multivariate data, though not all. Importantly, there is often no guarantee that kernels that are constructed from distance measures are even psd: for example, kernels constructed from the popular DTW distance are not psd without some further transformation \cite{abanda2019review}. By contrast, series kernels constructed according to our methodology are guaranteed the advantages that many distance-based kernels lack.

\section{Conclusion}
\label{section:conclusion}

We showed that kernels for time series can be easily constructed from well-known vector kernels. These present an alternative to the existing alignment-based and distance-based approaches to constructing kernels for sequential data, specifically. Furthermore, our series kernels natively possess the attractive properties of being psd, handling irregularly-spaced time series or those with missing values, and handling multivariate observations -- all without any additional preprocessing or regularization. We also discussed the intuition of smoothing/interpolation that lies behind these abilities. Empirically, we documented the utility of our series kernels for the traditional time series classification task, and also pursued a more interesting application of predicting series values using a Gaussian process-based regression strategy. 

We hope that future work will provide more extensive benchmarking-based evidence of various series kernels' performances on some common tasks, like \citet{bagnall16bakeoff} have done for time series classification. We also hope that future kernel-based methods will exploit our time series kernel methodology in new settings, especially for the problems of dimensionality reduction and multi-task learning. Finally, we hope that a kernel that is specifically designed for the real-life characteristics of time series (as opposed to sequences) may prove useful in practical applications.

\newpage

\section*{References}
\printbibliography[heading=none]

\end{document}